\newtheorem{theorem}{Theorem}[section]
\newtheorem{prop}[theorem]{Proposition}
\newtheorem{lemma}[theorem]{Lemma}  
\theoremstyle{definition}
\newtheorem{rmk}{Remark}
\theoremstyle{remark}
\newcommand{\bvec}[1]{\boldsymbol{#1}}
\DeclareMathOperator{\mcc}{MCC}
\DeclareMathOperator{\wmcc}{WMCC}
\DeclareMathOperator{\mpc}{MPC}
\DeclareMathOperator{\wmpc}{WMPC}
\DeclareMathOperator{\ecc}{ECC}
\DeclareMathOperator{\wecc}{WECC}
\DeclareMathOperator{\tr}{tr}
\newcommand{\nsp}{\hspace*{-0.02in}}
\definecolor{addblue}{rgb}{0.1,0,0.8}
\definecolor{darkgrn}{rgb}{0,0.75,0}
\date{}
\title{Weighted MCC: A Robust Measure of Multiclass Classifier Performance for Observations\\ with Individual Weights}
\author[1]{Rommel Cortez\thanks{Corresponding author; rommel.cortez@wsu.edu}}
\author[1]{Bala Krishnamoorthy}
\affil[1]{Department of Mathematics and Statistics, Washington State University}
\begin{document}
\maketitle

\begin{abstract}
  Several performance measures are used to evaluate binary and multiclass classification tasks.
  But individual observations may often have distinct weights, and none of these measures are sensitive to such varying weights.
  We propose a new weighted Pearson-Matthews Correlation Coefficient (MCC) for binary classification as well as weighted versions of related multiclass measures.
  The weighted MCC varies between $-1$ and $1$.
  But crucially, the weighted MCC values are higher for classifiers that perform better on highly weighted observations, and hence is able to distinguish them from classifiers that have a similar overall performance and ones that perform better on the lowly weighted observations.
  Furthermore, we prove that the weighted measures are \emph{robust} with respect to the choice of weights in a precise manner:
  if the weights are changed by at most $\epsilon$, the value of the weighted measure changes at most by a factor of $\epsilon$ in the binary case
  and by a factor of $\epsilon^2$ in the multiclass case.
  Our computations demonstrate that the weighted measures clearly identify classifiers that perform better on higher weighted observations, while the unweighted measures remain completely indifferent to the choices of weights.
\end{abstract}

\section{Introduction}

Multiple measures are used to evaluate the performances of binary and multiclass classifiers \cite{Th2021,LaSo2009}.
Accuracy, precision, recall, F1-score, area under ROC (AUC), and the Pearson-Matthews Correlation Coefficient (MCC) are among the commonly used measures.
Compared to most other measures, the MCC has several properties that make it simultaneously more informative and easier to use for evaluating both binary and multiclass classification performances.
The MCC is defined using all parts of the confusion matrix to compute a value between $1$ and $-1$.
But most, if not all, of these measures do not distinguish individual observations based on prescribed weights.

Variable weights have been considered at the level of classes, e.g., when working with imbalanced datasets \cite{BlLu2013}.
But in several applications, observations within each class could have distinct weights.
Categorization of plots of land from aerial images in Geographic Object-Based Image Analysis (GEOBIA) \cite{RadouxBogaert} presents such an instance.
The objects are represented by polygons and a standard task is to label them into one of several classes (farmland, forest, urban areas, etc.).
Thematic accuracy is size based, leading to the incorporation of weights corresponding to areas of the polygons.

Another typical application is in real-time semantic segmentation, i.e., pixel classification in autonomous driving \cite{PaChKiCu2016}.
While distinguishing road pixels from building or stop sign pixels is important, getting this assignment correct for the road pixels that lie close to the sidewalk or to a vehicle parked on the side of the road is more critical than for a pixel in the middle of the road.
But as far as we are aware, none of the classification performance measures can incorporate distinct weights for individual observations.

\subsection{Our contributions}
\label{contributions}

We define weighted versions of the Pearson-Matthews Correlation Coefficient (MCC) for binary classification (Sections \ref{bin_class}) and related weighted measures of performance for multiclass classification (Section \ref{mult_class}).
These measures incorporate distinct weights for each observation in the dataset, and yield higher vales for predictions that perform better on the more important, i.e., highly weighted, instances.
We study the mathematical properties of these weighted measures in terms of how robust they are with respect to small changes in the chosen weights.
We show that when the weights are changed by a small amount of at most $\epsilon$, the MCC for binary classification changes at most by a factor of $\epsilon$ (Theorem \ref{thm:mccstable}).
Correspondingly, we show that various measures of multiclass classification change by at most factors of $\epsilon^2$ (Theorem \ref{thm:msrstable})

We present computational experiments (Section \ref{computations}) showing the effect of varying weights of individual observations on measures of performance evaluation.
Our weighted measures clearly rank as higher those predictions that perform better on the more important, i.e., highly weighted, observations in both the binary and multiclass cases.
On the other hand, the standard (unweighted) measures fail to distinguish such good predictions from those that perform worse on the highly weighted observations.

\subsection{Related work}
\label{rel_work}

Several overviews of classification methods are available \cite{Th2021,Reetaletal2024}.
Reinke et al.~\cite{Reetaletal2024} focused on the pitfalls of classification methods in order to assist researchers in the selection of a measure appropriate for their goals.
Measures are often formulated in terms of the confusion matrix corresponding to the classification.
Several authors comment on the effect of imbalanced confusion matrices \cite{BlLu2013}.
In this regard, the $\mcc$ can perform better than other metrics, such as $F_1$ or accuracy, for binary classification, in part because $\mcc$ relies on the performance in both classes and the latter two are focused on the positive class.
Robustness of some measures with respect to their confusion matrix was investigated by Sokolova and Lapalme \cite{LaSo2009}.
Attention was placed on the invariance properties of the measure with respect to changes in the confusion matrix.  

A motivating work for us was that of Stoica and Babu \cite{StBa2024}, who considered the definition of $\mcc$ and multiclass measures in a systematic manner.
While these authors mentioned briefly the option of varying the weights for each class, a detailed analysis of their effect was not presented.
More importantly, distinct weights for individual observations were not considered by any other studies.

An analysis of various performance measures and their applicability is available in \cite{LaSo2009}.  A general discussion on  classification methods is also available in \cite{Th2021}.

A reviewer of a previous version of this manuscript made us aware of a recent implementation of weighted classification performance assessment measures presented as part of Scikit-Learn \cite{SeJe2025}.
But as far as we can see, such definitions have yet to be presented formally in the literature.
Furthermore, no theoretical guarantees of the robustness of such measures to small changes in weights are known.

\section{Definitions}
    \label{definitions}

    We define weighted performance measures using settings similar to that of Stoica and Babu \cite{StBa2024}.
    Our definitions naturally include the default (unweighted) measures as special cases.
    For brevity, we refer to the weighted measures by their default names, e.g., MCC.

    \subsection{Binary case}
    We define 
    \begin{equation}
        \label{mcc_def}
        \mcc =
        \frac{TP \cdot TN-FP\cdot FN}
             {\sqrt{(TP \nsp+\nsp FP)(TP \nsp+\nsp FN)(TN \nsp+\nsp FP)(TN \nsp+\nsp FN)}}
    \end{equation}
    where $TP, TN, FP, FN$ represent numbers of true positives, true negatives, false positives, and false negatives. 
    With $N$ samples (or observations), we associate two binary $N$-vectors (or sequences) $\bvec{t}$ and $\bvec{c}$ representing the true values and predictions, respectively.
    Let $S$ be a positive definite $N$-matrix of positive weights.
    In the default, unweighted setting, $S=I$, the identity matrix.
    Then the values used for $\mcc$ are calculated as
    \begin{align*}
        TP &= \bvec{t}^T S\bvec{c} = \langle\bvec{t},\bvec{c}\rangle, \\
        TN &= (\bvec{1-t})^T S(\bvec{1-c}) = \langle\bvec{1-t},\bvec{1-c}\rangle, \\
        FP &= (\bvec{1-t})^T S\bvec{c} = \langle\bvec{1-t},\bvec{c}\rangle, \text{ and }\\
        FN &= \bvec{t}^T S(\bvec{1-c}) = \langle\bvec{t},\bvec{1-c}\rangle.
    \end{align*}

    \vspace*{-0.1in}
    $\bvec{1}$ is the $N$-vector of ones.
    When needed, we write $\langle\bvec{t},\bvec{c}\rangle_S$ to emphasize the dependence on weight matrix $S$.
    While $S$ could be any positive definite matrix in general,
    we take the \textit{weight matrix} to be a positive definite symmetric matrix, and focus in particular on the case in which $S$ is diagonal.
    
    The expression for $\mcc = \mcc(\bvec{t},\bvec{c})$ is given as
    \begin{align}
        \hspace*{-0.1in} \mcc & =   \frac{\langle\bvec{t},\bvec{c}\rangle\langle\bvec{1}-\bvec{t},\bvec{1}-\bvec{c}\rangle - \langle\bvec{1}-\bvec{t},\bvec{c}\rangle \langle\bvec{t},\bvec{1}-\bvec{c}\rangle}{\sqrt{\langle\bvec{t},\bvec{1}\rangle \langle\bvec{1},\bvec{c}\rangle \langle\bvec{1-t},\bvec{1}\rangle \langle\bvec{1},\bvec{1-c}\rangle}} \label{mcc1}\\[1em]
        & =  \frac{\langle\bvec{t},\bvec{c}\rangle\langle\bvec{1},\bvec{1}\rangle - \langle\bvec{t},\bvec{1}\rangle \langle\bvec{1},\bvec{c}\rangle}{\sqrt{\langle\bvec{t},\bvec{1}\rangle \langle\bvec{1},\bvec{c}\rangle \langle\bvec{1-t},\bvec{1}\rangle \langle\bvec{1},\bvec{1-c}\rangle}} \label{mcc2}.
    \end{align}

    \subsection{Multiclass case}
    With $K \geq 3$ classes, two binary sequences are defined for each class $k$:
    \[\{t_n(k)\}_{n=1}^N\qquad\text{and}\qquad \{c_n(k)\}_{n=1}^N,\]
    where the true value $t_n(k)=1$ if the $n$th sample is in the $k$th class, and $0$ otherwise.
    The predictions $c_n(k)$ are defined similarly.
    The sequences can be viewed as column vectors
    \[\bvec{t}_n = \begin{bmatrix} t_n(1)\\ \vdots \\t_n(K)\end{bmatrix}\qquad\text{and}\qquad \bvec{c}_n = \begin{bmatrix} c_n(1) \\ \vdots \\ c_n(K)\end{bmatrix}.\]
    These vectors are used to form the covariance matrix
    \begin{equation}
        \label{covmat}
        R_{tc} = \frac{1}{\sum_{i=1}^N S_i} \, \sum_{n=1}^N S_n \bigl( \bvec{t}_n-\bar{\bvec{t}}\bigr)\bigl(\bvec{c}_n-\bar{\bvec{c}}\bigr)^{\top}\,,
    \end{equation}
    where $S_n$'s represent weights and 
    \[\label{tbar}\bar{\bvec{t}} = \frac{1}{\sum_{i=1}^N S_i}\sum_{n=1}^N S_n\bvec{t}_n\,,\qquad 
    \bar{\bvec{c}} = \frac{1}{\sum_{i=1}^N S_i}\sum_{n=1}^N S_n\bvec{c}_n.\]
    $R_{tc}$ is a square, $K\times K$, matrix.
    By taking $\bvec{t} = [\bvec{t}_1 \,\bvec{t}_2\,\ldots\,\bvec{t}_N]$, the matrix with column vectors $\bvec{t}_n$, and $\bvec{c} = [\bvec{c}_1 \,\bvec{c}_2\,\ldots\,\bvec{c}_N]$, the above can be represented as
    \begin{eqnarray}
        \label{eq:tbar}
        \bar{\bvec{t}} &=& \frac{1}{\tr(S)}\,\bvec{t}S\bvec{1}, \\
        \bar{\bvec{c}} &=& \frac{1}{\tr(S)}\,\bvec{c}S\bvec{1},\text{ and } \label{eq:cbar} \\
        R_{tc} & = & \frac{1}{\tr(S)}\,(\bvec{t}-\bar{\bvec{t}}\bvec{1}^{\top})S(\bvec{c}-\bar{\bvec{c}}\bvec{1}^{\top})^{\top},         \label{eq:Rtc}
    \end{eqnarray}
    where $S$ is a diagonal matrix containing the weights.
    %
    We consider the following metrics, based on the multivariate Pearson correlation ($\mpc$) coefficients:
    \begin{description}
        \item[Extended Correlation Coefficient ($ECC$):]
        \begin{equation}
            \label{ecc}
            R_K = \frac{\tr(R_{tc})}{\bigl[ \tr(R_{tt})\,\tr(R_{cc})\bigr]^{1/2}}  \hspace*{0.65in}
        \end{equation}
    \end{description}

        \begin{equation}
            \label{mpc1}
            \mpc_1 = \frac{\sum_{k=1}^{K}\bigl[R_{tc}\bigr]_{kk}}{\sum_{k=1}^{K}\bigl(\bigl[R_{tt}\bigr]_{kk}\bigl[R_{cc}\bigr]_{kk}\bigr)^{1/2}} \hspace*{0.25in}
        \end{equation}

        \begin{equation}
            \label{mpc2}
            \mpc_2 = \frac{1}{K}\sum_{k=1}^{K} \frac{\bigl[R_{tc}\bigr]_{kk}}{\bigl(\bigl[R_{tt}\bigr]_{kk}\bigl[R_{cc}\bigr]_{kk}\bigr)^{1/2}}
        \end{equation}
        We use a superscript of $S$, i.e., $R_K^S$, $\mpc_1^S$, and $\mpc_2^S$, to emphasize the dependence on weights  $S=\{S_i\}$.

    \begin{rmk}
        \label{rmk_0den}
        It is possible that terms with $0$ denominator occur in $\mpc_2$ (Eqn.~\ref{mpc2}).
        From the definition of covariance matrices in Eqn.~(\ref{covmat}), one can see that a $0$ in the denominator occurs only if one also appears in the numerator.
        While it may happen that the limit of the quotient becomes arbitrary or does not exist,
        we use a continuity argument \cite{ChJu2020} to set such terms to $0$ in most cases.
    \end{rmk}

\section{Weighted MCC for Binary Classes}
    \label{bin_class}
    
    We present several properties of the $\mcc$.
    The first one is known for the unweighted case, but our proof technique works for the weighted case in a unified manner.

    \begin{prop}
        \label{p1}
        $-1\leq \mcc \leq 1$.
    \end{prop}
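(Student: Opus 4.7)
The plan is to rewrite the weighted MCC from expression (\ref{mcc2}) as a normalized inner product of centered vectors and then invoke the Cauchy--Schwarz inequality. I first introduce the weighted means $\mu_t = \langle \bvec{t},\bvec{1}\rangle/\langle \bvec{1},\bvec{1}\rangle$ and $\mu_c = \langle \bvec{1},\bvec{c}\rangle/\langle \bvec{1},\bvec{1}\rangle$. Expanding $\langle \bvec{t}-\mu_t\bvec{1},\, \bvec{c}-\mu_c\bvec{1}\rangle$ and using bilinearity of $\langle\cdot,\cdot\rangle_S$, the cross terms telescope and one finds that the numerator in (\ref{mcc2}) equals $\langle \bvec{1},\bvec{1}\rangle \cdot \langle \bvec{t}-\mu_t\bvec{1},\, \bvec{c}-\mu_c\bvec{1}\rangle$.

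For the denominator, I exploit that $\bvec{t}$ and $\bvec{c}$ are binary and that $S$ is diagonal, so $t_i^2 = t_i$ gives the key identity $\langle \bvec{t},\bvec{t}\rangle = \langle \bvec{t},\bvec{1}\rangle$, and similarly for $\bvec{c}$. Substituting this into $\|\bvec{t}-\mu_t\bvec{1}\|_S^2 = \langle \bvec{t},\bvec{t}\rangle - 2\mu_t\langle \bvec{t},\bvec{1}\rangle + \mu_t^2\langle\bvec{1},\bvec{1}\rangle$ collapses it to $\mu_t(1-\mu_t)\langle\bvec{1},\bvec{1}\rangle$, which yields
$$\langle \bvec{t},\bvec{1}\rangle \langle \bvec{1-t},\bvec{1}\rangle = \langle \bvec{1},\bvec{1}\rangle \cdot \|\bvec{t}-\mu_t\bvec{1}\|_S^2,$$
and the analogous identity for $\bvec{c}$. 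Multiplying these two identities and taking the square root turns the denominator of (\ref{mcc2}) into $\langle \bvec{1},\bvec{1}\rangle \cdot \|\bvec{t}-\mu_t\bvec{1}\|_S \cdot \|\bvec{c}-\mu_c\bvec{1}\|_S$.

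Combining the two reductions, the common factor $\langle\bvec{1},\bvec{1}\rangle$ cancels and the MCC becomes
$$\mcc = \frac{\langle \bvec{t}-\mu_t\bvec{1},\, \bvec{c}-\mu_c\bvec{1}\rangle_S}{\|\bvec{t}-\mu_t\bvec{1}\|_S \cdot \|\bvec{c}-\mu_c\bvec{1}\|_S},$$
i.e., the cosine of the angle between two centered vectors in the inner product induced by the positive definite matrix $S$. Because $S$ is positive definite, $\langle\cdot,\cdot\rangle_S$ is a genuine inner product, and the Cauchy--Schwarz inequality immediately gives $-1 \leq \mcc \leq 1$.

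The main (and only) subtle point is the identity $\langle \bvec{t},\bvec{t}\rangle = \langle \bvec{t},\bvec{1}\rangle$, which relies on $\bvec{t}$ being binary \emph{together} with $S$ being diagonal; off-diagonal entries of $S$ would introduce additional cross terms and break the reduction. Since the paper restricts to diagonal weight matrices, this is not a real obstacle. I would also flag the degenerate case in which the denominator vanishes, which occurs precisely when $\bvec{t}$ or $\bvec{c}$ is constant; following the continuity convention noted in Remark \ref{rmk_0den}, one sets $\mcc = 0$ there so that the stated bound holds in all cases.
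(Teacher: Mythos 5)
Your proof is correct, but it takes a genuinely different route from the paper's. The paper works from Eqn.~(\ref{mcc1}) and bounds the \emph{two terms of the numerator separately}: using the binary-vector identity $\langle\bvec{a},\bvec{a}\rangle=\langle\bvec{a},\bvec{1}\rangle$ together with Cauchy--Schwarz, it shows that each of $\langle\bvec{t},\bvec{c}\rangle\langle\bvec{1}-\bvec{t},\bvec{1}-\bvec{c}\rangle$ and $\langle\bvec{1}-\bvec{t},\bvec{c}\rangle\langle\bvec{t},\bvec{1}-\bvec{c}\rangle$, divided by the common denominator, lies in $[0,1]$, so their difference lies in $[-1,1]$. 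You instead work from Eqn.~(\ref{mcc2}) and establish the exact identity
\[
\mcc=\frac{\langle \bvec{t}-\mu_t\bvec{1},\,\bvec{c}-\mu_c\bvec{1}\rangle_S}{\Vert\bvec{t}-\mu_t\bvec{1}\Vert_S\,\Vert\bvec{c}-\mu_c\bvec{1}\Vert_S},
\]
i.e., that the weighted $\mcc$ \emph{is} the weighted Pearson correlation of $\bvec{t}$ and $\bvec{c}$, after which a single application of Cauchy--Schwarz finishes the job. I verified your two reductions (numerator and denominator) and they are correct. Your version is more informative: it characterizes when $\mcc=\pm 1$ (centered vectors parallel or antiparallel), it connects directly to the covariance-matrix formulation used in the multiclass section, and you explicitly handle the degenerate case of a vanishing denominator via Remark~\ref{rmk_0den}, which the paper's proof leaves implicit (the statement as given silently assumes neither $\bvec{t}$ nor $\bvec{c}$ is constant). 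The paper's argument is slightly more elementary in that it never introduces the centered vectors. Note that both proofs rely on the same identity $\langle\bvec{t},\bvec{t}\rangle=\langle\bvec{t},\bvec{1}\rangle$, which requires $S$ diagonal (not merely symmetric positive definite), a restriction you correctly flag but the paper does not.
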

    
    \begin{proof}
      Let $\bvec{a}$ and $\bvec{b}$ be binary vectors.
      We get
      \[0\leq \langle\bvec{a},\bvec{b}\rangle \leq \min\big\{\langle\bvec{a},\bvec{a}\rangle, \langle\bvec{b},\bvec{b}\rangle\big\} ~\text{ and }
      \langle\bvec{a},\bvec{a}\rangle = \langle\bvec{a},\bvec{1}\rangle.\]
        Also, the Cauchy-Schwarz inequality implies
        \[0\leq \frac{\langle\bvec{a},\bvec{b}\rangle}{\sqrt{\langle\bvec{a},\bvec{1}\rangle \langle\bvec{b},\bvec{1}\rangle}} \leq 1. \quad \text{ Then }\]
        \begin{equation*}
          0 \leq  \quad \frac{\langle\bvec{t},\bvec{c}\rangle\langle\bvec{1}-\bvec{t},\bvec{1}-\bvec{c}\rangle}{\sqrt{\langle\bvec{t},\bvec{1}\rangle \langle\bvec{1},\bvec{c}\rangle \langle\bvec{1-t},\bvec{1}\rangle \langle\bvec{1},\bvec{1-c}\rangle}}\, , 
          \, \frac{\langle\bvec{1}-\bvec{t},\bvec{c}\rangle \langle\bvec{t},\bvec{1}-\bvec{c}\rangle}{\sqrt{\langle\bvec{t},\bvec{1}\rangle \langle\bvec{1},\bvec{c}\rangle \langle\bvec{1-t},\bvec{1}\rangle \langle\bvec{1},\bvec{1-c}\rangle}} \leq 1
        \end{equation*}
        so that the difference lies between $-1$ and $1$. 
    \end{proof}
    
    We study robustness of $\mcc$ with respect to the weights.

    \begin{prop}
        \label{p2}
        $\mcc$ is continuous with respect to the weight matrix $S$.
    \end{prop}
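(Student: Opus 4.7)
The plan is to write $\mcc$ as a composition of elementary continuous operations on the entries of the weight matrix $S$ and then identify the only point where continuity could fail. First I would observe that for any fixed binary vectors $\bvec{a}$ and $\bvec{b}$, the map $S \mapsto \langle\bvec{a},\bvec{b}\rangle_S = \bvec{a}^{\top} S \bvec{b}$ is linear in the entries of $S$, and therefore continuous. Each of the inner products appearing in the expressions (\ref{mcc1}) and (\ref{mcc2}) is of this form, so all of them are continuous functions of $S$.

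Next I would assemble these building blocks. The numerator of $\mcc$ is a polynomial in finitely many such inner products, hence continuous in $S$. The radicand of the denominator, $\langle\bvec{t},\bvec{1}\rangle_S \langle\bvec{1},\bvec{c}\rangle_S \langle\bvec{1-t},\bvec{1}\rangle_S \langle\bvec{1},\bvec{1-c}\rangle_S$, is likewise a polynomial in these inner products, and the square root is continuous on $[0,\infty)$, so the denominator is continuous wherever the radicand is nonnegative. The quotient of two continuous functions is then continuous wherever the denominator is nonzero, by the standard quotient rule. So the only nontrivial task is to verify that the denominator stays strictly positive on an open neighborhood of any admissible $S$.

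This is where I expect the main, though still mild, obstacle to sit. Under the paper's standing assumption that $S$ is positive definite---in the case on which we focus, diagonal with strictly positive entries---each factor $\langle\bvec{t},\bvec{1}\rangle_S$, $\langle\bvec{1},\bvec{c}\rangle_S$, $\langle\bvec{1-t},\bvec{1}\rangle_S$, $\langle\bvec{1},\bvec{1-c}\rangle_S$ reduces to a strictly positive linear combination of diagonal weights, indexed by the supports of $\bvec{t}$, $\bvec{c}$, $\bvec{1}-\bvec{t}$, and $\bvec{1}-\bvec{c}$, respectively. As long as both classes are present in $\bvec{t}$ and in $\bvec{c}$---the usual nondegeneracy hypothesis under which $\mcc$ is defined at all---each of these supports is nonempty, all four factors are strictly positive, and the radicand is bounded below by a strictly positive constant on a sufficiently small neighborhood of $S$. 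Continuity of $\mcc$ with respect to $S$ then follows. Degenerate cases where a support is empty correspond to the $0/0$ situation noted in Remark \ref{rmk_0den} and are handled by the same continuity convention rather than by this proposition.
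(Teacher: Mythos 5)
Your proposal is correct and follows essentially the same route as the paper: establish continuity of $S \mapsto \langle\bvec{a},\bvec{b}\rangle_S$ (the paper does this via a triangle-inequality bound, you via linearity) and then invoke continuity of the arithmetic operations composing $\mcc$. Your explicit verification that the denominator stays strictly positive under the nondegeneracy hypothesis is a detail the paper's proof leaves implicit, but it does not change the argument's structure.
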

    \begin{proof}
        Let $S$ and $W$ be weight matrices such that $\max{|S_{ii}-W_{ii}|}<\varepsilon$.
        For binary $N$-vectors $\bvec{t}$ and $\bvec{c}$,
        \[\bigl|\langle\bvec{t},\bvec{c}\rangle_{S-W}\bigr| < N\varepsilon,\]
        by the triangle inequality.
        This means that $\langle\bvec{t},\bvec{c}\rangle_S$ is continuous in $S$, for any choice of $\bvec{t}$ and $\bvec{c}$.
        Then the continuity of the operations involved in the calculation of $\mcc$ implies that $\mcc$ is continuous with respect to $S$.  
    \end{proof}

    The next lemma, which could be of independent interest, gives bounds on arithmetic operations that we use in proving the main results.

    \begin{lemma}
        \label{lem:bdops}
        Let $0<\epsilon<1$.
        \begin{enumerate}
           \item Let $x_i$$>$$0, M$$=$$\max\{x_i\}, m$$=$$\min\{x_i\}$, and $\epsilon<\frac{m}{2}$.  Then 
             \begin{align*}
               \prod x_i\, - \, \epsilon2^n\max_{k=0}^n\{M^k\} < \prod(x_i-\epsilon) 
               < \prod (x_i+\epsilon)  < \prod x_i + \epsilon2^n\max_{k=0}^n\{M^k\}.
             \end{align*}

            \item For $x>0$ and $\epsilon < \frac{x}{2}$,
            \[\frac{4\epsilon}{9x^2} < \left|\frac{1}{x}- \frac{1}{x\pm \epsilon}\right| < \frac{4\epsilon}{x^2}.\]

            \item For $x>0$ and $\epsilon < \frac{x}{2}$,
            \[\frac{\epsilon}{\sqrt{6x}} < \bigl| \sqrt{x}-\sqrt{x\pm\epsilon} \bigr| < \frac{\epsilon}{\sqrt{2x}}.\]

            \item For $x>0$ and $\epsilon< \frac{x}{2}$,
            \[\frac{\sqrt{2}\,\epsilon}{3x\sqrt{3x}} < \left| \frac{1}{\sqrt{x}} - \frac{1}{\sqrt{x\pm\epsilon}} \right| < \frac{\sqrt{2}\,\epsilon}{x\sqrt{x}}.\]

            \item For $x_1,x_2,y_1,y_2>0$ with $\vert x_1-x_2\vert < \epsilon < x_1/2$ and $\vert y_1-y_2\vert < \delta < y_1/2$, 
            \[\left\vert \frac{x_1}{y_1}-\frac{x_2}{y_2}\right\vert < \frac{x_1}{y_1}\left[\frac{4\delta}{y_1}+\epsilon\left(1+\frac{4\delta}{y_1}\right)\right].\]
        \end{enumerate}        
    \end{lemma}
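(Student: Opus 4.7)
The plan is to handle each part by direct rationalization (or distributive expansion for (a)), combined with the two-sided control $x/2<x\pm\epsilon<3x/2$ that the hypothesis $\epsilon<x/2$ (respectively $\epsilon<m/2$) provides.

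For part (a), I would expand $\prod(x_i\pm\epsilon)=\sum_{S\subseteq[n]}(\pm\epsilon)^{|S|}\prod_{i\notin S}x_i$ and split off the $S=\emptyset$ term, which equals $\prod x_i$. Each of the remaining $2^n-1$ cross terms contains at least one factor of $\pm\epsilon$, so using $\epsilon<1$ to collapse $\epsilon^j\leq\epsilon$ for $j\geq 1$, every cross term is bounded in absolute value by $\epsilon\max_{k=0}^{n}\{M^k\}$; adding them gives the stated $2^n\epsilon\max_k\{M^k\}$ bound. The middle inequality $\prod(x_i-\epsilon)<\prod(x_i+\epsilon)$ follows from positivity, since $\epsilon<m/2$ keeps each factor $x_i\pm\epsilon$ positive.

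For parts (b)--(d) the common strategy is to rationalize and then control the denominator via $x/2<x\pm\epsilon<3x/2$. For (b), write
\[\frac{1}{x}-\frac{1}{x\pm\epsilon} = \frac{\mp\epsilon}{x(x\pm\epsilon)},\]
so the denominator lies in $(x^2/2,\,3x^2/2)$, yielding $\frac{2\epsilon}{3x^2}<|\cdot|<\frac{2\epsilon}{x^2}$, which sits strictly inside the stated interval. For (c), rationalize
\[\sqrt{x}-\sqrt{x\pm\epsilon} = \frac{\mp\epsilon}{\sqrt{x}+\sqrt{x\pm\epsilon}};\]
the denominator lies in $(\sqrt{x}(1+1/\sqrt{2}),\,\sqrt{x}(1+\sqrt{3/2}))$, and the elementary identities $(1+1/\sqrt{2})^2\geq 2$ and $(1+\sqrt{3/2})^2\leq 6$ convert this into the stated $\sqrt{2x}$ and $\sqrt{6x}$ bounds. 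Part (d) combines (b) and (c): writing
\[\left|\frac{1}{\sqrt{x}}-\frac{1}{\sqrt{x\pm\epsilon}}\right| = \frac{|\sqrt{x\pm\epsilon}-\sqrt{x}|}{\sqrt{x}\sqrt{x\pm\epsilon}},\]
I would apply (c) to the numerator and the range $\sqrt{x}\sqrt{x\pm\epsilon}\in(x/\sqrt{2},\,x\sqrt{3/2})$ to the denominator, then simplify the resulting constants to match the claim.

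For part (e), I would decompose
\[\frac{x_1}{y_1}-\frac{x_2}{y_2} = x_1\!\left(\frac{1}{y_1}-\frac{1}{y_2}\right) + \frac{x_1-x_2}{y_2}.\]
Part (b) controls the first summand by $\frac{4x_1\delta}{y_1^2}$ and, as a one-sided consequence, gives $\frac{1}{y_2}<\frac{1}{y_1}\bigl(1+\frac{4\delta}{y_1}\bigr)$; combined with $|x_1-x_2|<\epsilon$, the second summand is at most $\frac{\epsilon}{y_1}\bigl(1+\frac{4\delta}{y_1}\bigr)$. Summing the two contributions yields a bound of the desired shape, which must then be repackaged so that $\frac{x_1}{y_1}$ is extracted as a common prefactor. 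The step I expect to be most delicate is this final repackaging into the exact form $\frac{x_1}{y_1}\bigl[\frac{4\delta}{y_1}+\epsilon\bigl(1+\frac{4\delta}{y_1}\bigr)\bigr]$; depending on the relative sizes of $x_1$ and $y_1$, it may be cleaner to begin the decomposition by first writing $\frac{x_2}{y_2} = \frac{x_1}{y_1}\cdot\frac{1+(x_2-x_1)/x_1}{1+(y_2-y_1)/y_1}$ and factoring $\frac{x_1}{y_1}$ out from the outset, then bounding the ratio $(1+a)/(1+b)$ using parts (b) and (c) on the denominator.
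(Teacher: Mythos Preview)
Your approach matches the paper's proof essentially line for line: distributive expansion for (a), rationalization plus the sandwich $x/2<x\pm\epsilon<3x/2$ for (b)--(d), and the same triangle-inequality splitting $\frac{x_1}{y_1}-\frac{x_2}{y_2}=x_1\bigl(\frac{1}{y_1}-\frac{1}{y_2}\bigr)+\frac{1}{y_2}(x_1-x_2)$ for (e), followed by an application of part (b). Your instinct that the final repackaging in (e) is the delicate point is well placed: the paper simply writes the last step as an equality, passing from $x_1\frac{4\delta}{y_1^2}+\epsilon\bigl(\frac{1}{y_1}+\frac{4\delta}{y_1^2}\bigr)$ directly to $\frac{x_1}{y_1}\bigl[\frac{4\delta}{y_1}+\epsilon(1+\frac{4\delta}{y_1})\bigr]$, so your proposed alternative of factoring out $\frac{x_1}{y_1}$ from the outset is, if anything, more careful than what the paper does.
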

    
    \begin{proof}
        \begin{enumerate}
            \item Expanding the product,
            \begin{align*}
                \prod(x_i+\epsilon) &\leq \prod x_i + \sum_{k=1}^n \binom{n}{k}\epsilon^kM^{n-k} \\
                &< \prod x_i + \epsilon(1+1)^n\max_{k=0}^n\{M^k\}\,,
            \end{align*}
            and, since $\epsilon<m/2$,
            \begin{align*}
                \prod(x_i+\epsilon) &> \prod x_i - \sum_{k=1}^n \binom{n}{k}\epsilon^kM^{n-k} \\
                &> \prod x_i - \epsilon(1+1)^n\max_{k=0}^n\{M^k\}\,.
            \end{align*}

            \item The claim follows from 
            \[\frac{\epsilon}{\left(\frac{3}{2}x\right)^2} < \frac{1}{x}-\frac{1}{x+\epsilon} < \frac{\epsilon}{x^2}\,,~~\text{ and }\]
            \[\frac{\epsilon}{x^2} < \frac{1}{x-\epsilon} - \frac{1}{\epsilon} = \frac{\epsilon}{(x-\epsilon)x} < \frac{\epsilon}{\left(\frac{1}{2}x\right)^2}\,.\]

            \item The claim follows from 
            \[\frac{\epsilon}{\sqrt{6x}} < \sqrt{x+\epsilon}-\sqrt{x} = \frac{\epsilon}{\sqrt{x+\epsilon} + \sqrt{x}} < \frac{\epsilon}{2\sqrt{x}},\]
            and 
            \[\frac{\epsilon}{2\sqrt{x}} < \sqrt{x}-\sqrt{x-\epsilon} = \frac{\epsilon}{\sqrt{x} + \sqrt{x-\epsilon}} < \frac{\epsilon}{\sqrt{2x}}.\]

            \item The claim follows from 
            \begin{align*}
                \frac{\epsilon}{3\sqrt{\frac{3}{2}}\,x\sqrt{x}} &< \frac{1}{\sqrt{x}} - \frac{1}{\sqrt{x+\epsilon}} \\
                &= \frac{\epsilon}{\sqrt{x}\sqrt{x+\epsilon}\bigl(\sqrt{x+\epsilon}+\sqrt{x}\bigr)} \\
                &< \frac{\epsilon}{x(2\sqrt{x})}\,,
            \end{align*}
            \begin{align*}
                \text{and }~~\frac{\epsilon}{x(2\sqrt{x})} &< \frac{1}{\sqrt{x-\epsilon}}-\frac{1}{\sqrt{x}} \\ &= \frac{\epsilon}{\sqrt{x}\sqrt{x+\epsilon}\bigl(\sqrt{x}+\sqrt{x-\epsilon}\bigr)} \\
                &< \frac{\epsilon}{\frac{1}{2}x\bigl(2\sqrt{\frac{1}{2}x}\bigr)}\,.
            \end{align*}

            \item Applying the triangle inequality and using Part (b):
            \begin{align*}
                \left\vert\frac{x_1}{y_1}-\frac{x_2}{y_2}\right\vert &\leq x_1\left\vert \frac{1}{y_1}-\frac{1}{y_2}\right\vert + \frac{1}{y_2}\vert x_1-x_2\vert \\[1em]
                &< x_1\frac{4\delta}{y_1^2} + \epsilon\left(\frac{1}{y_1} + \frac{4\delta}{y_1^2}\right) \\[1em]
                &= \frac{x_1}{y_1}\left[\frac{4\delta}{y_1}+\epsilon\left(1+\frac{4\delta}{y_1}\right)\right]. \qedhere
            \end{align*}
        \end{enumerate}
    \end{proof}

    We now present our first result on the robustness of MCC to the choice of weights.
    For a dataset with true classes $\bvec{t}$ and a given prediction $\bvec{c}$, we show that $\mcc$ values change only by at most a small amount when the weights of the observations are changed slightly.

    \begin{theorem}
        \label{thm:mccstable}
        Let $S$ and $W$ be two weight matrices such that $\max \left|S_{ii}-W_{ii}\right|<\epsilon$,
        and let $\mcc_S$ and $\mcc_W$ be the $\mcc$ values corresponding to these weights.
        For non-zero binary $N$-vectors $\bvec{t}$ and $\bvec{c}$, neither equal to $\bvec{1}$ or $\bvec{0}$, and $A = \{ \langle\bvec{t},\bvec{c}\rangle_S, \langle\bvec{t},\bvec{1}\rangle_S,\langle\bvec{1},\bvec{c}\rangle_S,\langle\bvec{1}-\bvec{t},\bvec{1}\rangle_S,\langle\bvec{1},\bvec{1}-\bvec{c}\rangle_S \}$, let $M = \max A$ and $m = \min A$. 
        
        If $m>0$ and $0<\epsilon< \min\{m/2,1/N\}$, then 
        \begin{align*}
            \bigl|\,\mcc_S(\bvec{t},\bvec{c}) - & \mcc_W(\bvec{t},\bvec{c})\,\bigr| < \mcc_S(\bvec{t},\bvec{s})\cdot \frac{2^5 \tr(S)^2 N}{m^2} \left[ 1 + M^2\left( 1+\frac{2^5\tr(S)^2N}{m^2}\epsilon \right)\right] \cdot\epsilon
        \end{align*}
    \end{theorem}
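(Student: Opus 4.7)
The plan is to exploit the quotient structure of formula (\ref{mcc2}) by writing $\mcc_S(\bvec{t},\bvec{c}) = X_S/Y_S$, with numerator $X_S = \langle\bvec{t},\bvec{c}\rangle_S\tr(S) - \langle\bvec{t},\bvec{1}\rangle_S\langle\bvec{1},\bvec{c}\rangle_S$ and denominator $Y_S = \sqrt{P_S}$ where $P_S = \langle\bvec{t},\bvec{1}\rangle_S\langle\bvec{1},\bvec{c}\rangle_S\langle\bvec{1-t},\bvec{1}\rangle_S\langle\bvec{1},\bvec{1-c}\rangle_S$. The final bound will be obtained by controlling $|X_S - X_W|$ and $|Y_S - Y_W|$ separately and then invoking the quotient bound in Lemma \ref{lem:bdops}(e) with $x_1/y_1 = \mcc_S$ and $x_2/y_2 = \mcc_W$.

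The starting observation, reused from the proof of Proposition \ref{p2}, is that every binary-vector inner product $\langle\bvec{a},\bvec{b}\rangle_\cdot$ and $\tr(\cdot) = \langle\bvec{1},\bvec{1}\rangle_\cdot$ is a sum of at most $N$ diagonal weights, so replacing $S$ by $W$ shifts any such quantity by strictly less than $N\epsilon$. Moreover $\tr(S) = \langle\bvec{t},\bvec{1}\rangle_S + \langle\bvec{1-t},\bvec{1}\rangle_S \le 2M$, so every factor appearing in either $X_S$ or $P_S$ lies in $[m, 2M]$. First I would apply Lemma \ref{lem:bdops}(a) with $n=2$ to each of the two summands of $X_S$, and with $n=4$ to $P_S$; combined with the triangle inequality, this yields bounds of the form $|X_S - X_W| < c_1 N M^2 \epsilon$ and $|P_S - P_W| < c_2 N M^4 \epsilon$ for explicit constants $c_1, c_2$. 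Since $P_S \ge m^4$, a single application of Lemma \ref{lem:bdops}(c) with $x = P_S$ converts the second bound into $|Y_S - Y_W| < c_3 N M^4 \epsilon / m^2$.

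With those bounds in hand, I would feed $\epsilon' = |X_S - X_W|$ and $\delta' = |Y_S - Y_W|$ into Lemma \ref{lem:bdops}(e), using $Y_S \ge m^2$ to control $1/Y_S$ and $M \le \tr(S)$ to rewrite $M^2/m^2$ in the stated $\tr(S)^2/m^2$ form. The right-hand side of part (e), namely
\[\frac{X_S}{Y_S}\left[\frac{4\delta'}{Y_S} + \epsilon'\!\left(1+\frac{4\delta'}{Y_S}\right)\right],\]
then rearranges into the claimed expression: identify the common prefactor $\tfrac{2^5\tr(S)^2 N}{m^2}\,\epsilon$, absorb the $\epsilon'$ contribution into the $M^2$ term, and collect the leftover $\epsilon$-quadratic piece into the inner $(1 + \tfrac{2^5\tr(S)^2 N}{m^2}\,\epsilon)$ factor. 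The main obstacle is not any single algebraic manipulation but the bookkeeping: at each invocation of Lemma \ref{lem:bdops} one has to verify that the relevant ``$\epsilon$'' is smaller than half the quantity being perturbed (which is exactly what $\epsilon < \min\{m/2,\,1/N\}$ is engineered to guarantee for the perturbations of size $N\epsilon$), and to chase how the $2^n$ factors from (a), the $1/\sqrt{2x}$ factor from (c), and the cross terms from (e) multiply so that the numerical constants collapse into the stated $2^5$.
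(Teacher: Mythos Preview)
Your proposal is correct and follows essentially the same route as the paper: bound each weighted inner product by $N\epsilon$, apply Lemma~\ref{lem:bdops}(a) to the two products in the numerator and to the four-fold product under the radical, use part~(c) to pass to the square root, and finish with the quotient estimate of part~(e). The paper carries $\tr(S)$ rather than $M$ through the numerator bound from the outset (using $\langle\bvec{t},\bvec{c}\rangle_S\le M\le\tr(S)$), whereas you first work with $M$ and convert at the end, but this is cosmetic.
One caution on the bookkeeping you flag: the hypotheses $\epsilon<m/2$ and $\epsilon<1/N$ together do \emph{not} literally force $N\epsilon<m/2$, so the ``$\epsilon<x/2$'' side conditions of Lemma~\ref{lem:bdops} are not all automatic; the paper glosses over this as well, and the argument still goes through because the constants in Lemma~\ref{lem:bdops}(a) are loose enough to absorb the slack, but be aware that this step is not quite as clean as stated.
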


    This bound guarantees that if the weights change by at most $\epsilon$, the $\mcc$ changes by at most a factor of $\epsilon$.
    Hence, one can trust the evaluation of the performance of a predictor using $\mcc$ even when the weights may be altered by small amounts.
    
    \begin{proof}
        We use the subscripts $S$ and $W$ to indicate the weights being used.
        By the triangle inequality,
        \[\bigl\vert \langle\bvec{a},\bvec{b}\rangle_S - \langle\bvec{a} ,\bvec{b}\rangle_W \bigr\vert < N\epsilon,\]
        for any binary vectors $\bvec{a}$ and $\bvec{b}$.  For the numerator, also observe that 
        \[\langle\bvec{t},\bvec{c}\rangle_S \leq M \leq \langle\bvec{1},\bvec{1}\rangle_S = \tr(S).\]
        Then, by Lemma \ref{lem:bdops},
        \begin{align*}
            \bigl\vert \langle\bvec{t},\bvec{c}\rangle_S \langle\bvec{1},\bvec{1}\rangle_S - \langle\bvec{t},\bvec{c}\rangle_W \langle\bvec{1},\bvec{1}\rangle_W \bigr\vert &< 2^2\tr(S)^2 N\epsilon, \\[1ex]
            \bigl\vert \langle\bvec{t},\bvec{1}\rangle_S \langle\bvec{1},\bvec{c}\rangle_S - \langle\bvec{t},\bvec{1}\rangle_W \langle\bvec{1},\bvec{c}\rangle_W \bigr\vert &< 2^2\tr(S)^2 N\epsilon.
        \end{align*}
        The triangle inequality then implies a bound for the change in the numerator is $2\bigl( 2^2\tr(S)^2 N\epsilon\bigr)$.

        For the denominator, let $a$, $b$, $c$, $d$ represent $\langle\bvec{t},\bvec{1}\rangle$,  $\langle\bvec{1},\bvec{c}\rangle$, $\langle\bvec{1}-\bvec{t},\bvec{1}\rangle$,  $\langle\bvec{1},\bvec{1}-\bvec{c}\rangle$, respectively.  By Lemma \ref{lem:bdops} and the inequality above,
        \[\left\vert (abcd)_S - (abcd)_W\right\vert < 2^4M^4N\epsilon.\]
        Then, by Lemma \ref{lem:bdops} again, 
        \begin{align*}
            \left\vert \sqrt{(abcd)_S} - \sqrt{(abcd)_W} \,\right\vert &<  \frac{2^4M^4N\epsilon}{\sqrt{2m^4}}.
        \end{align*}

        Applying \ref{lem:bdops}(e) with constants $\delta=2^3\tr(S)^2 N\epsilon$ and $\eta= \frac{2^4M^4N}{\sqrt{2}m^2}\epsilon$,
        \begin{align*}
            \bigl|\,\mcc_S(\bvec{t},\bvec{c}) - \mcc_W(\bvec{t},\bvec{c})\,\bigr| &< \mcc_S(\bvec{t},\bvec{s})\left[ \frac{4\delta}{m^2}+\eta\left(1+\frac{4\delta}{m^2}\right) \right] \\[1ex]
            &< \mcc_S(\bvec{t},\bvec{s})\cdot \frac{2^5 \tr(S)^2 N}{m^2} \left[ 1 + M^2\left( 1+\frac{4\delta}{m^2} \right)\right] \cdot\epsilon \\[1ex]
            &< \mcc_S(\bvec{t},\bvec{s})\cdot \frac{2^5 \tr(S)^2 N}{m^2} \left[ 1 + M^2\left( 1+\frac{2^5\tr(S)^2N}{m^2}\epsilon \right)\right] \cdot\epsilon
        \end{align*}
    \end{proof}

\section{Weighted Measures for the Multiclass Case}
    \label{mult_class}

    We study the sensitivity of measures ECC, $\mpc_1$, and $\mpc_2$ in Eqns.~(\ref{ecc})--(\ref{mpc2}) to small changes in weights.
    We use $\Vert\cdot \Vert$ to represent a matrix norm.

    \begin{prop}
        \label{prp:RS-RW}
        Let $S$ and $W$ be two weight matrices such that $\max |S_{ii}-W_{ii}|<\epsilon$.
 Let $s=\sum S_{ii}$ and let $0<\epsilon<s/2$.
 For binary $N$-vectors $\bvec{t}$ and $\bvec{c}$,
        \[\left\Vert R_{tc}^S - R_{tc}^W\right\Vert \leq \frac{4N\epsilon}{s^2}\sum_{n=1}^N \epsilon\,\bigl\Vert(\bvec{t}_n-\bar{\bvec{t}})(\bvec{c}_n-\bar{\bvec{c}})^{\top}\bigr\Vert\,.\]
    \end{prop}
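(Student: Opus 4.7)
The plan is to decompose $R_{tc}^S-R_{tc}^W$ by a telescoping argument that separates the three sources of dependence on the weights: the scalar normalization $1/\tr(S)$, the individual weights $S_n$ appearing inside the sum, and the centered vectors $\bvec{t}_n-\bar{\bvec{t}}$ and $\bvec{c}_n-\bar{\bvec{c}}$, whose means also depend on the weights. The triangle inequality for the matrix norm $\Vert\cdot\Vert$ then distributes the total error across these contributions.

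First I would establish two preliminary estimates. Since $|S_{ii}-W_{ii}|<\epsilon$ for every $i$, summing gives $|\tr(S)-\tr(W)|\leq N\epsilon$, and Lemma \ref{lem:bdops}(b) (with $x=s$ and $N\epsilon$ playing the role of the perturbation, which is why we need $N\epsilon<s/2$) yields
\[
\left|\frac{1}{\tr(S)}-\frac{1}{\tr(W)}\right| < \frac{4N\epsilon}{s^2}.
\]
Combining this with $|S_n-W_n|<\epsilon$ and the uniform boundedness of the binary vectors $\bvec{t}_n,\bvec{c}_n$, I would next use the representations in Eqns.~\eqref{eq:tbar}--\eqref{eq:cbar} together with the triangle inequality to bound $\Vert\bar{\bvec{t}}^S-\bar{\bvec{t}}^W\Vert$ and $\Vert\bar{\bvec{c}}^S-\bar{\bvec{c}}^W\Vert$; each will be of order $\epsilon$.

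Writing $\bvec{X}_n^S=(\bvec{t}_n-\bar{\bvec{t}}^S)(\bvec{c}_n-\bar{\bvec{c}}^S)^{\top}$ and similarly for $W$, the main step is the decomposition
\[
R_{tc}^S-R_{tc}^W = \left(\frac{1}{\tr(S)}-\frac{1}{\tr(W)}\right)\sum_n S_n \bvec{X}_n^S \;+\; \frac{1}{\tr(W)}\sum_n (S_n-W_n)\bvec{X}_n^S \;+\; \frac{1}{\tr(W)}\sum_n W_n\bigl(\bvec{X}_n^S-\bvec{X}_n^W\bigr).
\]
The first summand absorbs the $4N\epsilon/s^2$ factor together with $\sum_n S_n\Vert\bvec{X}_n^S\Vert$; the second summand contributes an $\epsilon$ factor directly from $|S_n-W_n|$ and already matches the shape of the right-hand side in the claim; the third summand is handled by expanding $\bvec{X}_n^S-\bvec{X}_n^W$ via the identity $ab^{\top}-cd^{\top}=(a-c)b^{\top}+c(b-d)^{\top}$ applied to the two pairs of centered vectors, reducing it to the mean-difference bounds established in the previous step.

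The hard part will be the third piece of the decomposition. Since $\Vert\bar{\bvec{t}}^S-\bar{\bvec{t}}^W\Vert$ and $\Vert\bar{\bvec{c}}^S-\bar{\bvec{c}}^W\Vert$ are themselves of order $\epsilon$, the difference $\bvec{X}_n^S-\bvec{X}_n^W$ is a sum of rank-one matrices whose norms must be re-expressed in terms of the combined quantity $\Vert(\bvec{t}_n-\bar{\bvec{t}})(\bvec{c}_n-\bar{\bvec{c}})^{\top}\Vert$ appearing in the claim, rather than in terms of separate norms of $\bvec{t}_n-\bar{\bvec{t}}$ and $\bvec{c}_n-\bar{\bvec{c}}$. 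Matching the precise form in the statement---a single prefactor $4N\epsilon/s^2$ with an interior factor of $\epsilon$---will therefore require showing that the first summand dominates and the other two can be absorbed into it without inflating the constant, using $\epsilon<s/(2N)$ to replace $1/\tr(W)$ by a constant multiple of $1/s$ wherever it appears.
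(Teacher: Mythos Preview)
Your approach is considerably more elaborate than the paper's. The paper's proof is two lines: it observes that $\bigl|\tr(S)-\tr(W)\bigr|\le N\epsilon$ and then invokes Lemma~\ref{lem:bdops} with $x=s=\tr(S)$, which produces the prefactor $4N\epsilon/s^2$, and that is all. There is no three-term telescoping, and in particular the dependence of the weighted means $\bar{\bvec{t}},\bar{\bvec{c}}$ on $S$ versus $W$ is not tracked at all---notice that in the statement of the proposition the centered vectors $(\bvec{t}_n-\bar{\bvec{t}})(\bvec{c}_n-\bar{\bvec{c}})^{\top}$ carry no superscript. The piece you flag as ``the hard part,'' namely $\bvec{X}_n^S-\bvec{X}_n^W$, simply does not appear in the paper's argument because the rank-one blocks are treated as fixed.

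What each route buys: your decomposition is the honest way to control $\Vert R_{tc}^S-R_{tc}^W\Vert$ and would give a genuine perturbation bound with all dependencies accounted for, but---as you already anticipate---it will not naturally collapse to the exact shape $\tfrac{4N\epsilon}{s^2}\sum_n\epsilon\,\Vert(\bvec{t}_n-\bar{\bvec{t}})(\bvec{c}_n-\bar{\bvec{c}})^{\top}\Vert$ given in the proposition. The paper, by freezing the centered vectors, reaches that form in one stroke at the cost of suppressing a real source of variation. If your aim is to match the paper's proof, drop the second and third terms of your telescope and just bound the scalar coefficients via Lemma~\ref{lem:bdops}; if your aim is a fully rigorous estimate, your plan is the right one, and you should not expect to recover the stated constant exactly.
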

    
    \begin{proof}
        Apply Lemma \ref{lem:bdops} with $s$ in place of $x$ and the observation that 
        \[\left|\sum S_{ii} - \sum W_{ii}\right| \leq \sum|S_{ii}-W_{ii}|\leq N\epsilon. \qedhere \] 
    \end{proof}

    \begin{rmk}
        \label{rmk:RS-RWexpr}
        Using matrix notation from Eqn.~(\ref{eq:Rtc}), the inequality takes the following form:
        \[\Vert R_{tc}^S - R_{tc}^W\Vert \leq \frac{4N\epsilon^2}{s^2}\Vert \bvec{t}-\bar{\bvec{t}}\bvec{e}^{\top}\Vert\,\Vert I\Vert\,\Vert (\bvec{c}-\bar{\bvec{c}}\bvec{e}^{\top})^{\top} \Vert.\]
    \end{rmk}

    It is observed by Stoica and Babu \cite{StBa2024} that 
    \begin{equation}
        \label{Rtckk}
        \bigl[R_{tc}\bigr]_{kk} = \frac{1}{\sum S_i} \sum_{n=1}^N S_n\bigl[t_n(k)-\bar{t}_k\bigr] \bigl[c_n(k)-\bar{c}_k\bigr].
    \end{equation}

    \begin{lemma}
        \label{lem:RtcS-RtcW}
        With same conditions as in Proposition \ref{prp:RS-RW},
        \begin{align*}
            \bigl\vert [R_{tc}^S]_{kk} & - [R_{tc}^W]_{kk}\bigr\vert\ \leq
            \frac{4N\epsilon^2}{s^2}\max_n\{|t_n(k)-\bar{t}_k|\}\,\max_n\{|c_n(k)-\bar{c}_k|\}.
        \end{align*}
    \end{lemma}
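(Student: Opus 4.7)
The plan is to specialize the matrix-norm bound of Proposition \ref{prp:RS-RW} (equivalently, Remark \ref{rmk:RS-RWexpr}) to the scalar diagonal entry $(k,k)$, using the explicit formula in Eqn.~\ref{Rtckk}. Set $s = \sum S_{ii}$, $w = \sum W_{ii}$, $u_n = t_n(k) - \bar{t}_k$, and $v_n = c_n(k) - \bar{c}_k$, so that $[R_{tc}^S]_{kk} = (1/s)\sum_n S_n u_n v_n$ and likewise with $W$ in place of $S$. The $(k,k)$ entry of the outer product $(\bvec{t}_n - \bar{\bvec{t}})(\bvec{c}_n - \bar{\bvec{c}})^{\top}$ is exactly $u_n v_n$, so I am really extracting one coordinate of the bound already proved for the full matrix.

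First I would split the difference via the telescoping identity
\[
[R_{tc}^S]_{kk} - [R_{tc}^W]_{kk} = \left(\frac{1}{s} - \frac{1}{w}\right)\sum_n S_n u_n v_n \;+\; \frac{1}{w}\sum_n (S_n - W_n)\, u_n v_n.
\]
Next, reuse the observation from the proof of Proposition \ref{prp:RS-RW} that $|s - w| \leq \sum_i |S_{ii} - W_{ii}| \leq N\epsilon$, and apply Lemma \ref{lem:bdops}(b) with $x = s$ and $N\epsilon$ in place of $\epsilon$ to conclude $|1/s - 1/w| < 4N\epsilon/s^2$. Then bound the two inner sums by pulling the maxes outside: $\left|\sum_n S_n u_n v_n\right| \leq (\max_n |u_n|)(\max_n |v_n|)\sum_n S_n = s\,(\max_n|u_n|)(\max_n|v_n|)$, and $\left|\sum_n (S_n - W_n)\, u_n v_n\right| \leq N\epsilon\,(\max_n|u_n|)(\max_n|v_n|)$. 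Combining via the triangle inequality gives a bound whose dominant term has exactly the shape $\frac{C N \epsilon}{s^2}\cdot s\cdot (\max|u_n|)(\max|v_n|)$ claimed in the statement.

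The main obstacle is reconciling the $\epsilon^2/s^2$ scaling in the stated bound with the telescope above, which at first reading produces only $O(\epsilon/s)$: the outer factor $|1/s - 1/w| \sim N\epsilon/s^2$ is multiplied by a sum of magnitude $O(s)$, leaving only one power of $\epsilon$. The missing power of $\epsilon$ must come from accounting for the dependence of the means $\bar{t}_k$ and $\bar{c}_k$ on the weight matrix: strictly, $u_n$ and $v_n$ take different values under the $S$- and $W$-based definitions, and the mean perturbation $|\bar{t}_k^S - \bar{t}_k^W|$ is itself $O(\epsilon)$ by a direct application of Lemma \ref{lem:bdops}(b) to Eqns.~\ref{eq:tbar}--\ref{eq:cbar}. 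Plausibly, exploiting the centering identity $\sum_n S_n u_n^S = 0$ produces a cancellation in the telescope that converts one of the leading $O(\epsilon)$ contributions into an $O(\epsilon^2)$ one; tracking this cancellation carefully, and matching the numerical constant $4$, is where the bulk of the technical work lies.
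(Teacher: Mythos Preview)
Your approach is exactly the paper's: the paper's entire proof reads ``As in the proof of Proposition~\ref{prp:RS-RW}, apply Lemma~\ref{lem:bdops} and the observation that $\left|\sum S_{ii} - \sum W_{ii}\right| \leq N\epsilon$.'' Your telescoping identity together with Lemma~\ref{lem:bdops}(b) applied at $x=s$ with $N\epsilon$ in place of $\epsilon$ is precisely a concrete implementation of that one-line instruction, so at the level of strategy there is no difference.

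Where you go beyond the paper is in flagging the $\epsilon$--versus--$\epsilon^2$ discrepancy. You are right that the straightforward telescope you wrote down produces a leading term of order $\epsilon/s$ rather than $\epsilon^2/s^2$: the factor $|1/s-1/w|\lesssim N\epsilon/s^2$ multiplies a sum of size $O(s)$, and the second piece $\frac{1}{w}\sum_n(S_n-W_n)u_nv_n$ is likewise $O(N\epsilon/s)$. The paper's proof does not supply the missing mechanism either---it simply asserts the $\epsilon^2$ bound and points back to Proposition~\ref{prp:RS-RW}, whose proof is equally terse. Your conjectured route via the centering identity $\sum_n S_n u_n^S=0$ and the $O(\epsilon)$ perturbation of $\bar t_k,\bar c_k$ is a reasonable place to look for the extra factor of $\epsilon$, but be aware that the paper itself never carries this out explicitly; you are not missing an argument that the paper provides.
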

    \begin{proof}
        As in the proof of Proposition \ref{prp:RS-RW}, apply Lemma \ref{lem:bdops} and the observation that
        \[\left|\sum S_{ii} - \sum W_{ii}\right| \leq \sum|S_{ii}-W_{ii}|\leq N\epsilon. \qedhere \]
    \end{proof}

    \begin{lemma}
        \label{lem:trRtcS-trRtcW}
        Let $S$ and $W$ be weight matrices and $s=\tr(S)$.
        Suppose that $\max|S_{ii}-W_{ii}|<\epsilon < s/2$.
        Given the binary vectors $\bvec{t}$ and $\bvec{c}$, let $M_t = \max { \left| [\bvec{t}-\bar{\bvec{t}}\bvec{e}^{\top}]_{ij} \right| }$ and $M_c = \max { \left| [\bvec{c}-\bar{\bvec{c}}\bvec{e}^{\top}]_{ij} \right| }$.
        Then 
        \[\left| \tr(R_{tc}^S) - \tr(R_{tc}^W) \right| \leq \frac{4N^2\epsilon^2}{s^2} M_t M_c.\]
    \end{lemma}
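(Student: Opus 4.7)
The plan is to decompose the trace as a sum of diagonal entries and then invoke Lemma \ref{lem:RtcS-RtcW}, which already handles the per-entry perturbation (including the implicit dependence of $\bar{\bvec{t}}, \bar{\bvec{c}}$ on the weights). Writing
\[
\tr(R_{tc}^S) - \tr(R_{tc}^W) \;=\; \sum_{k=1}^{K}\bigl([R_{tc}^S]_{kk} - [R_{tc}^W]_{kk}\bigr)
\]
and applying the triangle inequality reduces the claim to a sum of $K$ diagonal-entry bounds.

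The first step will be to observe that the per-class maxima appearing in Lemma \ref{lem:RtcS-RtcW} are controlled by the global quantities $M_t$ and $M_c$ defined here. Since $M_t$ is the maximum of $|[\bvec{t}-\bar{\bvec{t}}\bvec{e}^{\top}]_{ij}|$ over all index pairs, it dominates $\max_n|t_n(k)-\bar{t}_k|$ for every fixed $k$, and likewise for $M_c$. Hence Lemma \ref{lem:RtcS-RtcW} yields the class-uniform bound
\[
\bigl|[R_{tc}^S]_{kk} - [R_{tc}^W]_{kk}\bigr| \;\leq\; \frac{4N\epsilon^2}{s^2}\, M_t M_c.
\]

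Summing this estimate over $k=1,\ldots,K$ and combining with the triangle inequality above gives
\[
\bigl|\tr(R_{tc}^S) - \tr(R_{tc}^W)\bigr| \;\leq\; K\cdot \frac{4N\epsilon^2}{s^2}\, M_t M_c.
\]
The final step is to note that in any meaningful classification setting $K \leq N$ (one needs at least one observation per class), which upgrades the factor $KN$ to $N^2$ and yields the stated bound.

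The proof carries essentially no technical obstacle: Lemma \ref{lem:RtcS-RtcW} absorbs all the perturbation analysis, so the only real content is (i) noticing that the global maxima $M_t, M_c$ dominate the per-class maxima, and (ii) accepting the mild slack from replacing $K$ by $N$ to match the target expression.
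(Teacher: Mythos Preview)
Your proposal is correct and follows exactly the paper's approach: apply the triangle inequality to $\tr(R_{tc}^S)-\tr(R_{tc}^W)=\sum_k([R_{tc}^S]_{kk}-[R_{tc}^W]_{kk})$ and invoke Lemma~\ref{lem:RtcS-RtcW} termwise. Your explicit remarks that the global maxima $M_t,M_c$ dominate the per-class maxima and that $K\le N$ converts the resulting $KN$ into $N^2$ are details the paper leaves implicit but which are indeed needed to reach the stated bound.
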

    
    \begin{proof}
        Apply triangle inequality and Lemma \ref{lem:RtcS-RtcW}. \qedhere
    \end{proof}

    \begin{lemma}
        \label{lem:RSWbds}
        Consider the conditions as in Lemma \ref{lem:trRtcS-trRtcW}.
        \begin{enumerate}
            \item There is a constant $C$ depending on $\bvec{t}$, $\bvec{c}$ such that \label{lem:RSWbds:tr}
              \begin{equation}
                \label{eq:trRS-trRW}
                \bigl|\tr(R_{tt}^S)\, \tr(R_{cc}^S) - \tr(R_{tt}^W)\, \tr(R_{cc}^W) \bigr| < \frac{4N^2\epsilon^2}{s^2}\,C.
              \end{equation}

            \item \label{lem:RSWbds:Ck} There is a constant $C_k$ depending on $\bvec{t}$, $\bvec{c}$ such that 
            \[\bigl|[R_{tt}^S]_{kk}[R_{cc}^S]_{kk} - [R_{tt}^W]_{kk}[R_{cc}^W]_{kk}\bigr| \leq \frac{4N\epsilon^2}{s^2}\,C_k.\] 

            \item There is a constant $C$ depending on $\bvec{t}$, $\bvec{c}$ such that  \label{lem:RSWbds:sqrtS2W2}
            \begin{align*}
               \left| \sum_{k=1}^K\biggl(\bigl[R_{tt}^S\bigr]_{kk} \bigl[R_{cc}^S\bigr]_{kk} \biggr)^{\!\!1/2} - 
               \sum_{k=1}^K\biggl(\bigl[R_{tt}^W\bigr]_{kk} 
               \bigl[R_{cc}^W\bigr]_{kk}\biggr)^{\!\!1/2} \right| 
               \leq \frac{4N\epsilon^2}{s^2}\,C. 
            \end{align*}

            \item With an appropriately chosen constant $C$, \label{lem:RSWbds:frc}
            \begin{align*}
                \left| \frac{[R_{tc}^S]_{kk}}{\bigl([R_{tt}^S]_{kk}[R_{cc}^S]_{kk}\bigr)^{1/2}} - \frac{[R_{tc}^W]_{kk}}{\bigl([R_{tt}^W]_{kk}[R_{cc}^W]_{kk}\bigr)^{1/2}} \right| 
                \leq \hspace*{2.5in} \\
                \left\vert\frac{[R_{tc}^S]_{kk}}{\bigl([R_{tt}^S]_{kk}[R_{cc}^S]_{kk}\bigr)^{1/2}}\right\vert\,  
                \left[ \frac{4\cdot 4N C}{s^2}  + \frac{4N C}{s^2}\left(1+ \frac{4\cdot 4N C}{s^2} \!\cdot\! \epsilon^2\right)
                \right]\! \cdot \epsilon^2.\\
            \end{align*}
        \end{enumerate}
    \end{lemma}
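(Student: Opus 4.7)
The strategy throughout is to propagate the single-trace bound (Lemma \ref{lem:trRtcS-trRtcW}) and the single-entry bound (Lemma \ref{lem:RtcS-RtcW}) through products, square roots, and quotients via repeated application of Lemma \ref{lem:bdops}. In every part the stated constant $C$ (or $C_k$) absorbs $M_t$, $M_c$, $K$, and the relevant trace or diagonal-entry values, all of which are fixed once $\bvec{t}$ and $\bvec{c}$ are fixed. Possibly vanishing denominators are handled by the continuity convention of Remark \ref{rmk_0den} together with the hypothesis $\epsilon<s/2$.

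For part~(\ref{lem:RSWbds:tr}), I would first apply Lemma \ref{lem:trRtcS-trRtcW} twice, once with $\bvec{t}$ playing the role of both inputs and once with $\bvec{c}$, to obtain $|\tr R_{tt}^S - \tr R_{tt}^W|\le\frac{4N^2\epsilon^2}{s^2}M_t^2$ and analogously for $R_{cc}$. The standard identity $|ab-cd|\le|a|\,|b-d|+|d|\,|a-c|$ then gives the product bound with $C=\tr(R_{tt}^S)\,M_c^2+\tr(R_{cc}^W)\,M_t^2$. Part~(\ref{lem:RSWbds:Ck}) is structurally identical but uses Lemma \ref{lem:RtcS-RtcW} at the single diagonal entry, so the $N^2$ drops to $N$ and $M_t, M_c$ may be sharpened to $\max_n|t_n(k)-\bar{t}_k|$ and $\max_n|c_n(k)-\bar{c}_k|$.

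Part~(\ref{lem:RSWbds:sqrtS2W2}) combines~(\ref{lem:RSWbds:Ck}) with Lemma \ref{lem:bdops}(c). Setting $x_k=[R_{tt}^S]_{kk}[R_{cc}^S]_{kk}$ and $\widetilde{x}_k=[R_{tt}^W]_{kk}[R_{cc}^W]_{kk}$, part~(\ref{lem:RSWbds:Ck}) gives $|x_k-\widetilde{x}_k|\le\frac{4N\epsilon^2}{s^2}C_k$, and Lemma \ref{lem:bdops}(c) bounds $|\sqrt{x_k}-\sqrt{\widetilde{x}_k}|$ by that difference divided by $\sqrt{2x_k}$. Summing over $k$ via the triangle inequality and absorbing the per-class factors $C_k/\sqrt{2x_k}$ into a single constant $C$ yields the claim.

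The main work is in part~(\ref{lem:RSWbds:frc}), to which I apply Lemma \ref{lem:bdops}(e) directly with $x_1=[R_{tc}^S]_{kk}$, $x_2=[R_{tc}^W]_{kk}$, $y_1=\sqrt{[R_{tt}^S]_{kk}[R_{cc}^S]_{kk}}$, and $y_2=\sqrt{[R_{tt}^W]_{kk}[R_{cc}^W]_{kk}}$. The numerator shift is bounded by Lemma \ref{lem:RtcS-RtcW} and is of order $\epsilon^2$; the denominator shift is bounded by the single-$k$ version of part~(\ref{lem:RSWbds:sqrtS2W2}) and is also of order $\epsilon^2$. Because \emph{both} the $\epsilon$ and the $\delta$ entering Lemma \ref{lem:bdops}(e) now carry a factor of $\epsilon^2$, the resulting bracket ends up of order $\epsilon^2$ rather than $\epsilon$, which is the whole point of the statement. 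The main obstacle is bookkeeping: the natural constants produced from the numerator and denominator shifts are different, so I need to pass to a common majorant $C$ (the max of the two, after absorbing $y_1$-dependent factors that arise from $4\delta/y_1$) and verify term by term that the stated bracket $\bigl[\tfrac{16NC}{s^2}+\tfrac{4NC}{s^2}\bigl(1+\tfrac{16NC}{s^2}\epsilon^2\bigr)\bigr]$ dominates $4\delta/y_1+\epsilon_e(1+4\delta/y_1)$. I would also need to check that the smallness hypotheses of Lemma \ref{lem:bdops}(e), namely $|x_1-x_2|<x_1/2$ and $|y_1-y_2|<y_1/2$, follow from $\epsilon<s/2$ for $\epsilon$ sufficiently small relative to $\bvec{t}$, $\bvec{c}$.
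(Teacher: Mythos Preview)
Your approach is essentially the paper's: parts~(\ref{lem:RSWbds:tr}) and~(\ref{lem:RSWbds:Ck}) push the trace/entry bounds through a product, part~(\ref{lem:RSWbds:sqrtS2W2}) feeds~(\ref{lem:RSWbds:Ck}) into Lemma~\ref{lem:bdops}(c) and sums, and part~(\ref{lem:RSWbds:frc}) is exactly Lemma~\ref{lem:bdops}(e) with both inputs of order $\epsilon^2$. One cosmetic slip: in part~(\ref{lem:RSWbds:tr}) your constant $C=\tr(R_{tt}^S)\,M_c^2+\tr(R_{cc}^W)\,M_t^2$ still carries $W$, whereas the lemma promises $C$ depending only on $\bvec{t},\bvec{c}$ (and $S$); the paper handles this by multiplying the two double inequalities directly, which produces the extra cross term $\tfrac{4N^2}{s^2}$ and leaves $C=M_t^2\tr(R_{cc}^S)+M_c^2\tr(R_{tt}^S)+\tfrac{4N^2}{s^2}$ free of $W$. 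Your identity gives the same thing once you bound $\tr(R_{cc}^W)\le\tr(R_{cc}^S)+\tfrac{4N^2}{s^2}M_c^2$ and absorb.
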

    
    \begin{proof}
        \begin{enumerate}
            \item By Lemma \ref{lem:trRtcS-trRtcW},
            \begin{align*}
                \tr(R_{tt}^S) - \frac{4N^2\epsilon^2 M_t^2}{s^2} &< \tr(R_{tt}^W)
                < \tr(R_{tt}^S) + \frac{4N^2\epsilon^2 M_t^2}{s^2},
            \end{align*}
            and 
            \begin{align*}
                \tr(R_{cc}^S) - \frac{4N^2\epsilon^2 M_c^2}{s^2} < \tr(R_{cc}^W) 
                < \tr(R_{cc}^S) + \frac{4N^2\epsilon^2 M_c^2}{s^2}.
            \end{align*}
            Let $C = M_t^2\tr(R_{cc}^S) + M_c^2\tr(R_{tt}^S) + \frac{4N^2}{s^2}$.  Multiplying the above two inequalities and recalling $\epsilon<1$ imply  
            \begin{align*}
                \tr(R_{tt}^S)\tr(R_{cc}^S)  -  \frac{4N^2\epsilon^2}{s^2}\,C < \tr(R_{tt}^W)\tr(R_{cc}^W) 
                < \tr(R_{tt}^S)\tr(R_{cc}^S) + \frac{4N^2\epsilon^2}{s^2}\,C.
            \end{align*}
            
            \item By Lemma \ref{lem:RtcS-RtcW}, 
            \begin{align*}
                [R_{tt}^S]_{kk} - \frac{4N\epsilon^2}{s^2}\,M_t^2 \leq [R_{tt}^W]_{kk} 
                \leq [R_{tt}^S]_{kk} + \frac{4N\epsilon^2}{s^2}\,M_t^2,
            \end{align*}
            and 
            \begin{align*}
                [R_{cc}^S]_{kk} - \frac{4N\epsilon^2}{s^2}\,M_c^2 &\leq [R_{cc}^W]_{kk} 
                \leq [R_{cc}^S]_{kk} + \frac{4N\epsilon^2}{s^2}\,M_c^2.
            \end{align*}
            Let $C_k = [R_{tt}^S]_{kk}M_c^2 + [R_{cc}^S]_{kk}M_t^2 + \frac{4N}{s^2}$.
 Multiplying both sides of the above equations implies 
            \begin{align*}
                [R_{tt}^S]_{kk}[R_{cc}^S]_{kk} - &\frac{4N\epsilon^2}{s^2}\,C_k \leq [R_{tt}^W]_{kk}[R_{cc}^W]_{kk} 
                \leq [R_{tt}^S]_{kk}[R_{cc}^S]_{kk} + \frac{4N\epsilon^2}{s^2}\,C_k.
            \end{align*} 

            \item By proof of Part \ref{lem:RSWbds:Ck} above and Lemma \ref{lem:bdops}, 
            \begin{align*}
                \left|\biggl([R_{tt}^S]_{kk}[R_{cc}^S]_{kk}\biggr)^{\!\!1/2} \! - \! \right. & \left. \biggl([R_{tt}^W]_{kk}[R_{cc}^W]_{kk}\biggr)^{\!\!1/2}\right| \leq 
                \frac{4N\epsilon^2}{s^2}\cdot \frac{C_k}{\sqrt{2[R_{tt}^S]_{kk}[R_{cc}^S]_{kk}}}.
            \end{align*}
            Apply the triangle inequality and take 
            \[C = \sum_{k=1}^K\frac{C_k}{\sqrt{2[R_{tt}^S]_{kk}[R_{cc}^S]_{kk}}}.\]
            
            \item Select $C$ to be an upper bound for the bounds needed in Lemma \ref{lem:RtcS-RtcW} and the proofs in the items above, and apply Lemma \ref{lem:RtcS-RtcW} and Lemma \ref{lem:bdops}(e).
        \end{enumerate}
    \end{proof}

    We use the above results to now present the main result certifying the robustness of the three multiclass measures of performance to small changes in the weights.
    These results correspond to the robustness bound in Theorem \ref{thm:mccstable} for the $\mcc$ in the binary case.
    Just as in that case, we get here as well that each measure changes by a factor of at most $\epsilon^2$ when the weights are changed by $\epsilon$.

    \begin{theorem}
        \label{thm:msrstable}
        Consider the same notation and conditions as in Lemma \ref{lem:trRtcS-trRtcW}.
        We have that each constant $C$ below depends on $\bvec{t}$ and $\bvec{c}$ to give the following results.
        \begin{enumerate}
            \item For $y=\tr(R_{tt}^S)\tr(R_{cc}^S)$, there is a constant $C$ such that
                \begin{equation}
                    \label{eq4.3.2}
                    \bigl|R_K^S - R_K^W\bigr| \leq \vert R_K^S\vert \left[\frac{4\cdot 4N^2 C}{y\sqrt{2}s^2} + \frac{4N^2M_tM_c}{s^2}\left(1+\frac{4\cdot 4NC}{y\sqrt{2}s^2}\epsilon^2 \right)\right] \cdot \epsilon^2.
                \end{equation}
                
            \item For $y = \sum\bigl([R_{tt}^S]_{kk}[R_{cc}^S]_{kk}\bigr)^{1/2}$, there is a constant $C$ such that 
            \begin{equation*}
                \left|\mpc_1^S - \mpc_1^W\right| \leq \vert\mpc_1^S\vert \left[ 
                \frac{4\cdot 4NC}{ys^2} + \frac{4N^2M_tM_c}{s^2}\left(1 + \frac{4\cdot 4NC}{ys^2}\epsilon^2\right)
                \right] \cdot \epsilon^2.
            \end{equation*}

            \item There is a constant $C$ such that 
            \[\hspace*{-0.15in} \left|\mpc_2^S - \mpc_2^W\right| \leq \sum\left\vert \frac{[R_{tc}^S]_{kk}}{K\bigl([R_{tt}^S]_{kk}[R_{cc}^S]_{kk}\bigr)^{1/2}} \right\vert \left[ \frac{4\cdot 4NC}{s^2} + \frac{4NC}{s^2}\left(1 + \frac{4\cdot 4NC}{s^2}\epsilon^2\right) \right]\, \cdot \epsilon^2.\]
        \end{enumerate} 
    \end{theorem}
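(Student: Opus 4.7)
The plan is to treat each of the three multiclass measures as a ratio $x/y$ and reduce the problem, in each case, to a single application of Lemma \ref{lem:bdops}(e) once bounds on the deviations of the numerator and denominator are in hand. All three parts follow the same template, so the work amounts to identifying what plays the role of $\epsilon$ and $\delta$ in that lemma and carefully tracking the constants.

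For part (a), the numerator $\tr(R_{tc}^S)$ deviates by at most $4N^2 M_t M_c \epsilon^2/s^2$ by Lemma \ref{lem:trRtcS-trRtcW}. The denominator $[\tr(R_{tt}^S)\tr(R_{cc}^S)]^{1/2}$ is handled in two steps: first, Lemma \ref{lem:RSWbds}(a) bounds the change in $y := \tr(R_{tt}^S)\tr(R_{cc}^S)$ by $4N^2 C \epsilon^2/s^2$; then Lemma \ref{lem:bdops}(c) transports this through the square root to produce a deviation of at most $4N^2 C \epsilon^2/(s^2 \sqrt{2y})$ for the denominator itself. Feeding these two deviations into Lemma \ref{lem:bdops}(e) with $x_1 = \tr(R_{tc}^S)$ and $y_1 = \sqrt{y}$ yields the stated bound, once the factor $1/y_1$ from the lemma is identified with $1/\sqrt{y}$ and combined with the $\sqrt{2y}$ from the square-root step.

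For part (b), I would follow the same route. The numerator $\sum_k [R_{tc}^S]_{kk}$ is controlled by the triangle inequality together with Lemma \ref{lem:RtcS-RtcW}, absorbing the factor of $K \le N$ into a constant and yielding a deviation of order $4N^2 M_t M_c \epsilon^2/s^2$. The denominator's deviation is exactly the content of Lemma \ref{lem:RSWbds}(c). Applying Lemma \ref{lem:bdops}(e) then produces the claim. Part (c) is the most direct: Lemma \ref{lem:RSWbds}(d) already supplies the per-summand bound, so the triangle inequality across the $K$ terms in $\mpc_2^S - \mpc_2^W$ together with the $1/K$ prefactor gives the claim immediately, after choosing $C$ to dominate all per-$k$ constants.

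The main obstacle is not any single inequality but rather the uniform choice of the constant $C$ in each part: each invocation of Lemma \ref{lem:RSWbds} produces its own $\bvec{t},\bvec{c}$-dependent constant (involving the diagonal entries of $R_{tt}^S$ and $R_{cc}^S$, or their traces), and the clean statement requires selecting $C$ as the maximum of these. A minor subtlety is that Lemma \ref{lem:bdops}(e) requires the denominator perturbation to be strictly less than half of the denominator; since the denominator changes are of order $\epsilon^2$, this is automatic once $\epsilon$ is small enough relative to the unperturbed denominator, which can be folded into the standing hypothesis $\epsilon < s/2$ by adding a further smallness condition depending on $\bvec{t}$ and $\bvec{c}$.
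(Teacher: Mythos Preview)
Your proposal is correct and follows precisely the route the paper itself takes: the paper's own proof is a single sentence (``In each case, make appropriate use of Lemmas \ref{lem:bdops}, \ref{lem:RtcS-RtcW}--\ref{lem:RSWbds}''), and you have simply unpacked this by identifying, for each part, which lemma supplies the numerator deviation, which supplies the denominator deviation, and then applying Lemma \ref{lem:bdops}(e). Your observation about the additional smallness assumption on $\epsilon$ needed for Lemma \ref{lem:bdops}(e) to apply is a valid technical point that the paper does not make explicit.
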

    
    \begin{proof}
        In each case, make appropriate use of Lemmas \ref{lem:bdops}, \ref{lem:RtcS-RtcW}--\ref{lem:RSWbds}.
    \end{proof}

\section{Computational Results}
\label{computations}

We present a detailed computational study highlighting the effects of individual weights of observations on the performance of $\mcc$ in the binary case as well as on the weighted measures for the multiclass case.

Figure \ref{fig:sep_plots} presents the results for the binary case.
We consider samples of size $N=150$.
We generate the vector $\bvec{t}$ of true classes values, and then generate a collection of prediction vectors $\bvec{c}$ whose classification performance is to be measured by the weighted $\mcc$.
A contiguous section of $\bvec{c}$ of length $N/3=50$ is created to match the corresponding section of $\bvec{t}$ to a given proportion $p$.
We consider three values for this match proportion: $p=\{0, 0.5, 1.0\}$.
The remainder of $\bvec{c}$ will match the corresponding section of $\bvec{t}$ with proportion $p_0=0.5$.
The horizontal axis for the plots indicates the initial index in $\bvec{c}$ of the section matching with proportion $p$.
For instance, if this initial index is $51$, then observations 51--100 in $\bvec{c}$ are matched exactly with the corresponding ones in $\bvec{t}$.
The vertical axis records the average $\mcc$ value over $100$ samples.

For comparison with the default (unweighted) MCC, we report the averages labeled as $\mcc$ when all observations have the same weight (we set each diagonal entry of the weight matrix $S$ as $1$).
The weighted case is reported under $\wmcc$, for which the diagonal entries of $S$ are set such that the first third of the weights are $1$, the middle third are $100$, and the last third are $10000$.

\begin{figure}[ht!]
  \centering
  \includegraphics[width=0.7\textwidth]{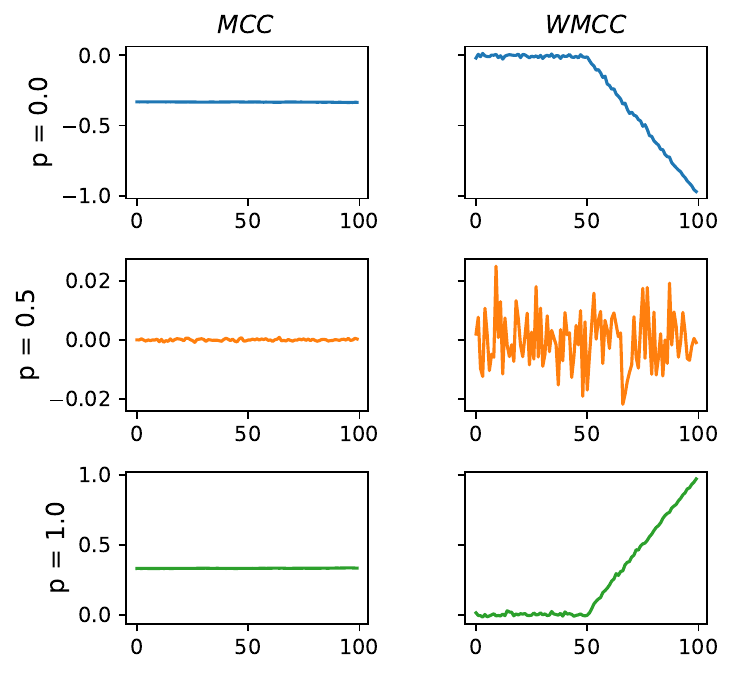}
  \caption{Plots of the mean $\mcc$ and $W\mcc$ for 100 sample vectors $\bvec{c}$ with 1/3 of the sample matching at proportion $p$ and the other 2/3 matching at $p_0=0.5$.
    The section matching at $p$ is contiguous and starts at the index indicated along the horizontal axis.}
  \label{fig:sep_plots}
\end{figure}

The graphs show that the last third of the sample, the portion with weights set to $10000$, drives the value of $\wmcc$.
When the matching portion $p=0.5$ for the entire sample vector $\bvec{c}$, a value of $0$ is expected for both $\mcc$ and $\wmcc$.
The imbalance in weights causes higher variation, but this is evident in the middle row of Figure \ref{fig:sep_plots}.
As $p$ varies, the effect of the change in matching proportion in the higher weighted entries is evident in the $\wmcc$ column: each graph lingers close to $0$ until the starting index of the section with proportion $p$ passes $50$---when at least half of this segment lies in the portion of $\bvec{c}$ with the highest weights.
Similarly, for $p=0$,
$\wmcc$ ranks worse (values closer to $-1$) the predictions that misclassify the highly weighted observations.

A similar experiment is presented for the multiclass case in Figure \ref{fig:mult_sep_plots}.
Here, the values of $\ecc$, $\wecc$, $\mpc_1$, $\wmpc_1$, $\mpc_2$, and $\wmpc_2$ are calculated for $K\times N=3\times 150$ matrices $\bvec{t}$ and $\bvec{c}$, representing values for $K=3$ classes and $N=150$ cases.
As for the binary case, the columns of the matrices are weighted such that the first 50 have weight 1, the second 50 have weight 100, and the last 50 have weight 10000.
A contiguous section of columns from $\bvec{c}$ of length $N/3$ are set to match the corresponding columns of $\bvec{t}$ at a given proportion $p$.
The remaining columns of $\bvec{c}$ match the corresponding columns of $\bvec{t}$ at proportion $p_0=0.5$.
The graphs show the average of each metric over 100 different matrices $\bvec{c}$.
As in the binary case, the effect of the weights is evident by the change in the graphs at index value $50$, as more than half of the section matching at proportion $p$ has heavier weights.
It should be noted that unlike the binary case, the unweighted values are non-negative.
Also, for the given samples the values of $\wecc$, $\wmpc_1$, and $\wmpc_2$ are almost identical.

\section{Discussion}
\label{sec_disc}

Our primary objectives have been to provide an analysis of the effects of weights on individual observations on the values of $\mcc$ and its extensions to multiclass case, and then to compare the behavior of unweighted metrics to the corresponding weighted metrics.
With regard to the former, there is a rigorous relationship between changes in weights and changes in the values for the metrics---an adjustment in the weights of up to $\epsilon$ leads to changes in values by a factor of at most $\epsilon^2$.
At the same time, the exact nature of this relationship may be complicated by imbalances in the data and warrant further investigation.
Such imbalances were addressed by Radoux and Bogaert \cite{RadouxBogaert} through enhanced versions of the metrics that eliminate common zeros.
Perhaps a similar improvement can be accomplished by applying weights to observations.
Another source of variation can occur at possible discontinuities for the metrics, as mentioned in Remark \ref{rmk_0den}.  

Several metrics exist, and a critical part of classification is the selection of an appropriate metric \cite{Reetaletal2024}.
The use of distinct weights for individual observations may help address some concerns for metrics, and doing so only modestly increases the computational burden.

\begin{landscape}
  \vspace*{1in}
  \begin{figure}[ht!]
    \hspace*{-0.1in}
    \includegraphics[width=9in]{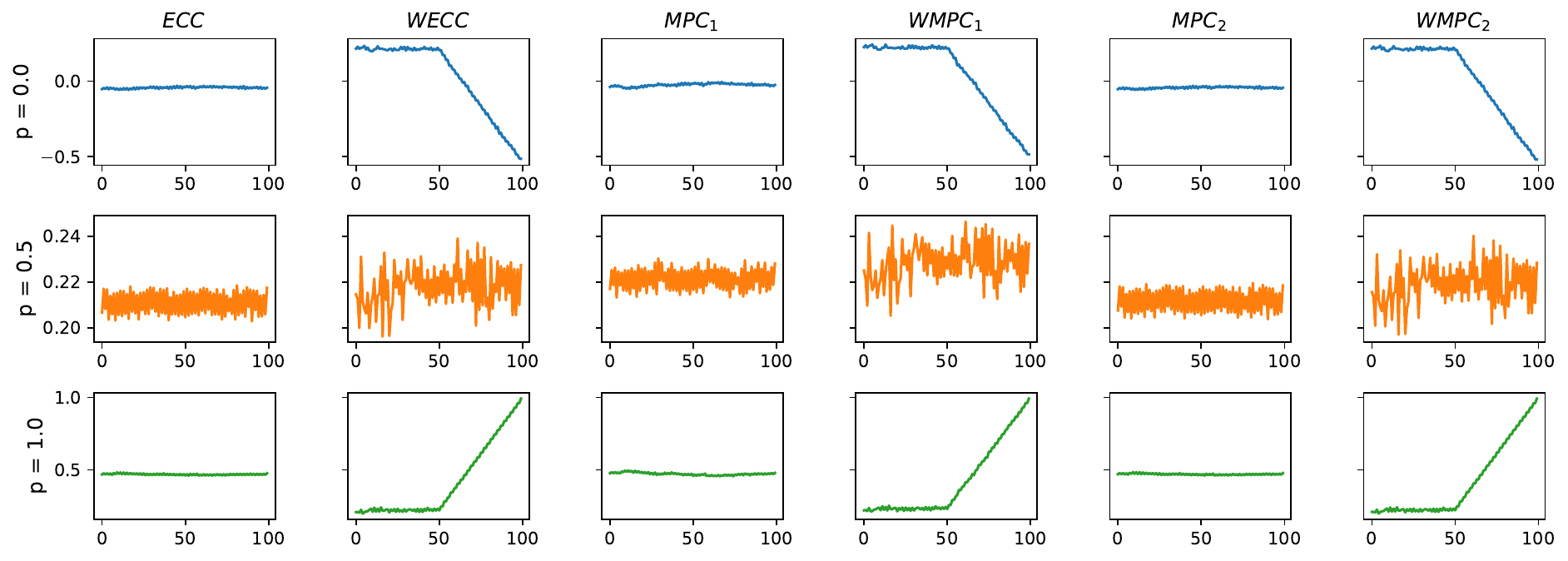}
    \caption{Plots of the mean values of each multiclass metric for 100 sample matrices $\bvec{c}$ with 1/3 of the sample matching at proportion $p$ and the other 2/3 matching at $p_0=0.5$.
      The section matching at $p$ is contiguous and starts at the index indicated along the horizontal axis.}
    \label{fig:mult_sep_plots}
  \end{figure}
\end{landscape}


 \input{Signal-Processing.bbltex}

\end{document}